\title{MOTO: Offline Pre-training to Online Fine-tuning for Model-based Robot Learning}
\newcommand{\state}{{\bm{s}}}
\newcommand{\action}{{\bm{a}}}
\newcommand{\latent}{{\bm{z}}}
\newcommand{\obs}{{\bm{x}}}
\newcommand{\rew}{{\bm{r}}}
\newcommand{\policy}{{\pi}}
\newcommand{\dynamics}{p_{\theta}}
\newcommand{\history}{{\bm{h}}}
\DeclareMathOperator*{\E}{\mathbb{E}}
\theoremstyle{plain}
\newtheorem{theorem}{Theorem}[section]
\theoremstyle{definition}
\theoremstyle{remark}
\author{%
  Rafael Rafailov\thanks{Equal contribution}\,\,\,\footnotemark[2] \And Kyle Hatch\footnotemark[1]\,\,\,\footnotemark[2] \And Victor Kolev\footnotemark[2] \\
  \AND John D. Martin\footnotemark[3] \And Mariano Phielipp\footnotemark[3] \And Chelsea Finn\footnotemark[2] \\
  \AND \textnormal{\footnotemark[2]\,\,Stanford University \footnotemark[3]\,\;Intel AI Labs} \\
  \texttt{\{rafailov,khatch\}@cs.stanford.edu}
}
\begin{document}
\maketitle

%===============================================================================

\begin{abstract}
We study the problem of offline pre-training and online fine-tuning for reinforcement learning from high-dimensional observations in the context of realistic robot tasks. Recent offline model-free approaches successfully use online fine-tuning to either improve the performance of the agent over the data collection policy or adapt to novel tasks. At the same time, model-based RL algorithms have achieved significant progress in sample efficiency and the complexity of the tasks they can solve, yet remain under-utilized in the fine-tuning setting. In this work, we argue that 
existing model-based offline RL methods are not suitable for offline-to-online fine-tuning in high-dimensional domains
due to issues with distribution shifts, off-dynamics data, and non-stationary rewards. We propose an on-policy model-based method that can efficiently reuse prior data through model-based value expansion and policy regularization, while preventing model exploitation by controlling epistemic uncertainty. 
We find that our approach successfully solves tasks from the MetaWorld benchmark, as well as the Franka Kitchen robot manipulation environment completely from images. 
To the best of our knowledge, MOTO is the first method to solve this environment from pixels.\footnote{Additional details are available on our project website: \url{https://sites.google.com/view/mo2o/}}

%%CF.1.22: These last two sentences are really the only ones that currently talk about what this paper does. I'd instead aim for 50+% of the abstract to be discussing what this paper does and what the key results are.
%%CF.1.25: Looks like the abstract hasn't been revised yet.
%Offline reinforcement learning has the promise to address some of the biggest challenges in learning autonomous behaviours. Over the last couple of years a plethora of algorithms have been developed including both model-based and model-free.
%%CF.1.22: These first two sentences are super generic, and don't really say anything that is unique to this paper. Can you instead motivate the core problem studied in this paper? Why is offline-to-online RL an important problem?
\end{abstract}
% Two or three meaningful keywords should be added here
\keywords{Model-based reinforcement learning, offline-to-online fine-tuning, high-dimensional observations} 

%===============================================================================

\section{Introduction}

Pre-training and fine-tuning as a paradigm has been instrumental to recent advances in machine learning. In the context of reinforcement learning, this takes the form of pre-training a policy with offline learning \citep{LangeGR12, levine2020offline}, i.e. when only a static dataset of environment interactions is available, and then subsequently fine-tuning that policy with a limited amount of online fine-tuning. We study the offline-to-online fine-tuning problem with a focus on high-dimensional pixel observations, as found in real-world applications, such as robotics. 

% Offline reinforcement learning~\citep{LangeGR12, levine2020offline} is a promising approach for scaling reinforcement learning (RL) to broad datasets, and can serve as a pre-training step for efficiently learning a new task.
% In such an offline-to-online fine-tuning problem, the agent is provided with a static, pre-collected dataset, and is tasked with leveraging this dataset, along with a small amount of online interaction, to effectively solve a task. We study this problem with a focus on settings with high-dimensional pixel observations, as is common in real-world applications such as robotics.
Prior works for offline-to-online fine-tuning often train a policy with model-free offline RL objectives throughout both the offline and online phases~\cite{kostrikov2021offline, kumar2020conservative, nair2020accelerating, yang2021representation, chen2021decision, reed2022generalist}. While this approach addresses the challenge of distribution shift in the offline phase, it leads to excessive conservatism in the online phase, since the policy cannot balance offline conservatism with online exploration. Moreover, model-free works are lacking in generalization abilities, and can even under-perform when trained on non task-specific data  \cite{yu2021conservative, yu2021data}.

Model-based methods, in which the agent learns a representation of the environment and a dynamics model, present an interesting alternative, as they have shown generalization ability to  unseen, within-distribution tasks \citep{yu2020mopo, yu2021combo, cang2021behavioral, rafailov2021visual}. They are also sample-efficient and can enable online exploration via model-generated rollouts \citep{hafner2020mastering}. Lastly, predictive models can also naturally learn stable representations, which makes them suitable for realistic high-dimensional domains \citep{hu2022model, hu2021fiery, akan2022stretchbev, Rafailov2020LOMPO}. Despite the compelling case for model-based learning in the offline-to-online fine-tuning problem, this has been under-explored, with the literature focused mostly on model-free methods.  

% An alternative to prior model-free algorithms are model-based methods, in which the agent learns a representation and dynamics model from the provided offline dataset and uses model-generated rollouts for policy training or planning \citep{yu2020mopo, kidambi2020morel, matsushima2020deployment, argenson2020model, yu2021combo, cang2021behavioral, Rafailov2020LOMPO}. These approaches have shown good performance on more diverse datasets and can also generalize to new within-distribution tasks \citep{yu2020mopo, yu2021combo, cang2021behavioral, rafailov2021visual}. Predictive models can also naturally learn stable representations, which makes them suitable for use in realistic high-dimensional domains \citep{hu2022model, hu2021fiery, akan2022stretchbev, Rafailov2020LOMPO}. However, the pre-training and fine-tuning approaches for model-based RL are still under-explored, as the literature has mostly focused on model-free methods.  

In this work, we argue that existing algorithms for offline model-based RL are not suitable to pre-training and fine-tuning in high-dimensional domains. In particular, algorithms that use replay buffers of model-generated data, such as \citep{yu2020mopo, yu2021combo, cang2021behavioral, Rafailov2020LOMPO}, create significant distributional shift issues, as the learned model dynamics and reward functions change with additional online interactions. Models with high-dimensional observations, such as \citep{Rafailov2020LOMPO, yu2021combo}, deal with the additional complexity of representation shift of the latent data. These algorithms are also not feasible in large models with high-dimensional representation spaces, which are common in real-world applications \citep{hu2022model, hu2021fiery, akan2022stretchbev}. On the other hand, on-policy model-based RL methods such as \citep{kidambi2020morel, matsushima2020deployment} are amenable to fine-tuning but do not make efficient use of high-quality data in the policy training objective or are not scalable to models with changing representation spaces. 

To alleviate these issues, we propose the MOTO (\textbf{M}odel-based \textbf{O}ffline-\textbf{T}o-\textbf{O}nline) algorithm. MOTO is a model-based actor-critic algorithm which operates in high-dimensional observation spaces. Crucially, MOTO uses model-based value expansion, which removes the need for large replay buffers, mitigates issues related to distribution shifts, and allows for the use of large-scale predictive models while still allowing the use of high-quality offline data in the critic learning. To prevent model exploitation, we additionally implement ensemble model-based uncertainty estimation and policy regularization. We evaluate MOTO on 10 tasks from the MetaWorld benchmark \cite{yu2020meta} and two tasks in the Franka Kitchen domain \citep{gupta2019relay, fu2020d4rl}, completely from vision. Our approach outperforms baselines on 9/10 environments in the MetaWorld benchmark and solves both settings in the Franka kitchen. To the best of our knowledge, MOTO is the first method to successfully complete the tasks from vision. Moreover, by studying the fine-tuning regime, we empirically validate theoretical performance bounds from prior model-based offline RL methods.

We summarize our contributions as follows: (1) we propose a new model-based actor-critic algorithm for offline pre-training and online fine-tuning; (2) we show the first successful solution to the Franka Kitchen task from images; (3) we empirically verify a proposed theoretical performance gap; (4) to facilitate further research in this area, we will publicly release our environments and datasets.

\begin{figure*}
    \centering
    \includegraphics[width = \textwidth]{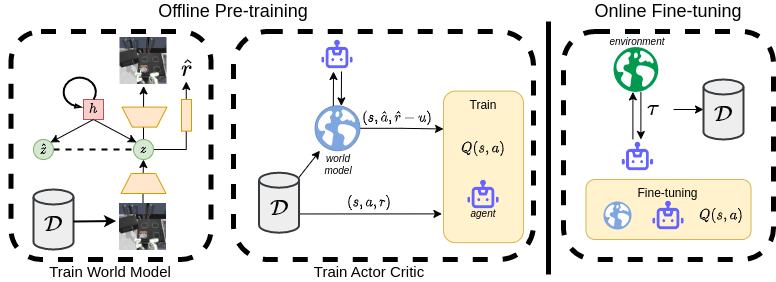}
    \caption{\footnotesize Model-based offline to online fine-tuning. A static dataset of experience is used to train a world model, with which the offline actor-critic agent interacts. The actor-critic agent is trained via both data from the environment and data from the model via model-based value expansion. Model data is penalized via an uncertainty penalty which inhibits model exploitation. Finally, during fine-tuning, the agent interacts with the environment, and collects new trajectories, which are used to jointly fine-tune the world model and actor-critic. }
    \label{fig:ood_model_rollout}
    \vspace{-2.em}
\end{figure*}

\section{Related Work}\label{sec:related_work}

Our work is at the intersection of offline RL, model-based RL and control from high-dimensional observations (i.e. images). We review related work from these fields below.

% Model-free offline RL 

% \subsection{Model-Free Offline RL} Most model-free offline RL algorithms augment well known online learning methods with various forms of regularization. 
% Major approaches include policy constrains, such as in \citep{kumar2019stabilizing, wu2019behavior, fujimoto2018off, fujimoto2021minimalist},
% and regression-based algorithms \citep{peng2019advantage, wang2020critic, nair2020accelerating, kostrikov2021offline}. Another common approach is to regularize the critic \citep{kumar2020conservative}. Offline model free approaches have been shown to perform well in both low-dimensional observation and image-based domains, typically with the use of image-augmentation techniques in the latter. 

% These include regularized variants of importance sampling based algorithms~\citep{LiuSAB19, SwaminathanJ15, nachum2019algaedice, zhanggendice}, actor-critic algorithms~\citep{wu2019behavior, jaques2019way, siegel2020keep, peng2019advantage}, and approximate dynamic programming algorithms~\citep{fujimoto2018off, kumar2019stabilizing, kumar2020conservative, Liu2020ProvablyGB, agarwal2019striving}. 

\paragraph{Model-Based Offline RL} Model-based  offline RL algorithms~\citep{kidambi2020morel, yu2020mopo, argenson2020model, matsushima2020deployment, swazinna2020overcoming, Rafailov2020LOMPO, yu2021combo} learn a predictive model from the offline dataset and use it for policy training.
% Several works \citep{yu2020mopo, kidambi2020morel, cang2021behavioral, Rafailov2020LOMPO} train an ensemble of dynamics models and use model disagreement as a measure of model uncertainty. The RL policies are optimized by explicitly incorporating disagreement into the expected return, which encourages the agent to remain within the part of the state-action distribution with low model epistemic uncertainty. Another major approach is to incorporate explicit data constraints into the policy optimization, such as \citep{matsushima2020deployment, argenson2020model, swazinna2020overcoming, cang2021behavioral}. The COMBO approach \cite{yu2021combo} uses a single model to generate data and combines model-based RL with off-policy conservative optimization \cite{kumar2020conservative}. With the exception of \citep{Rafailov2020LOMPO} and limited experiments in \cite{yu2021combo}, the above works mostly focus on simple low-dimensional control problems with short planning horizons.  
We would like to design a model-based reinforcement learning algorithm that can efficiently utilize offline datasets, while being easily amenable to continual learning and online fine-tuning. A line of prior works \citep{yu2020mopo, yu2021combo, cang2021behavioral} uses MBPO-style optimization \citep{janner2019trust}, which mixes real and model-generated data in a replay buffer used for policy training. \citep{Rafailov2020LOMPO} generalizes this approach to more realistic domains using variational models and latent ensembles and manages to solve a real robot task involving desk manipulation. However, these methods are not well-suited to the fine-tuning tasks, since the data in the replay buffer is sampled from the model's internal representation space, which suffers from significant distribution shift as the model is fine-tuned. Moreover, the need for replay buffers limits the scalability of these algorithms, as state-of-the-art predictive models in many realistic applications (such as autonomous driving \citep{hu2022model, hu2021fiery, akan2022stretchbev}) require very large model and representation sizes.  Several algorithms such as MOREL \citep{kidambi2020morel} and BREMEN \citep{matsushima2020deployment} use on-policy training within the learned model without the need for large replay buffers, making them well-suited for offline-to-online fine-tuning settings, but cannot use potentially high-quality data from the offline dataset to supervise the actor-critic training. 

\paragraph{Variational Dynamics Models}
Variational predictive models have demonstrated success in a variety of challenging applications. One line of research \citep{hu2022model, hu2021fiery, watter2015embed, zhang2019solar, SLAC2020Lee} utilizes the model for representation purposes only and uses standard RL, control, or imitation on top of it. Others such as \citep{Hafner2019PlanNet, Dreamer2020Hafner, hafner2020mastering, ha2018world} use the latent dynamics model either to learn a policy within the model or deploy shooting-based planning methods. However, most of those prior works focus on the online setting and do not make good use of highly-structured prior data or account for distribution shift. Our method utilizes model-based value expansion, which allows us to take advantage of the efficiency of model-based training, while also using high-quality offline data for critic supervision.

\section{Preliminaries}
In this section we review the modeling framework for our world model and epistemic uncertainty estimates.
\paragraph{World Model}\label{model_prelim}
To model the high-dimensional observations of the environment, we use a recurrent VAE based on the RSSM model \cite{Hafner2019PlanNet, Dreamer2020Hafner}. The model consists of the following components:
\begin{eqnarray*}
   \latent_t \sim q_\theta(\latent_t \mid \history_t, \obs_t) & &\text{latent representation encoder}\\
    \history_t = f_\theta(\latent_{t-1}, \history_{t-1}, \action_{t-1}) && \text{deterministic latent state} \\
    \hat{\latent}_{t} \sim \dynamics^i(\latent_{t} \mid \history_t)  && \text{stochastic latent state}\\
    \hat{\obs}_t \sim p_\theta(\obs_t \mid \latent_t, \history_t) &&\text{observation decoder}\\
    \hat{\rew}_t \sim p_\theta(\rew_t \mid \latent_t, \history_t) & &\text{reward decoder}
\end{eqnarray*}
where $\obs_t$ are the high-dimensional environment observations, $\action_t$ are the actions, $\rew_t$ are the rewards, $\history_t$ are deterministic latent states, and $\latent_t$ are stochastic latent states. We denote the latent state $\state_t=[\history_t, \latent_t]$ as the concatenation of both. All components of the model are trained jointly via the ELBO loss as: 
\begin{align}\label{eq:model_eq}
\mathcal{L}^{\text{model}}_{p_\theta, q_{\theta}} = \E_{\tau \sim \mathcal{D}} \Big[ 
\sum_t -&\ln p_\theta(\obs_t \mid \state_t) - \ln p_\theta(\rew_t \mid \state_t) + \nonumber\\
&\mathbb{D}_{KL}[q_{\theta}(\state_{t}|\obs_{t}, \state_{t-1}, \action_{t-1})||\dynamics^{i_t}(\state_{t}|\state_{t-1}, \action_{t-1})]\Big].
\end{align}
In our experiments we use discrete latent state models, following the DreamerV2 architecture \cite{hafner2020mastering}. Notice that we train an ensemble of stochastic latent dynamics models $\{\dynamics^i(\state_{t+1}|\latent_t)\}_{i=1}^M$ following \cite{Rafailov2020LOMPO} by randomly selecting one model $\dynamics^{i_t}$ to optimize at each time step of the trajectory in the model in Eq. \ref{eq:model_eq}. This makes the ensemble training no more computationally expensive than single model optimization.

\paragraph{Offline Model-Based RL From High-Dimensional Observations}\label{prelim_lompo_combo}
% Previous works have considered conservative-based models for offline RL from high-dimensional observations \citep{Rafailov2020LOMPO, yu2021combo}. These approaches consider an MBPO-style approach \citep{janner2019trust} in the latent space of a variational model. In particular, they first train a variational dynamics model using Eq. \ref{eq:model_eq} once from the available offline dataset. \citet{Rafailov2020LOMPO} leverages the approach from \citet{yu2020mopo} and trains an ensemble of latent models $\{T_{\theta^i}(\state_{t+1}|\state_t, \action_t)\}_{i=1}^M$ by randomly selecting an ensemble member at each time step to optimize in Eq. \ref{eq:model_eq}. During policy training it maintains a latent replay buffer which consist of real transitions sampled from the inference model $q_{\theta}$ as well as off-policy model-generated data from the latent policy rollouts. Similarly to MOPO model-based conservatism is implemented by penalizing the model-sampled rewards using a measure of model-disagreement. All experiments in this work use the penalty
%% VK: the above feels too much like a related works section

To mitigate issues with model exploitation, similar to \citep{Rafailov2020LOMPO, yu2020mopo}, we train a latent dynamics model ensemble $\{\dynamics^i(\state_{t+1}|\latent_t)\}_{i=1}^M$, and implement model conservatism by penalizing rewards via dynamics model disagreement, which acts as a proxy for epistemic uncertainty. We use the penalty:
\begin{equation}\label{eq:disagreement}
     u_{\theta}(\state_t, \action_t)  = \text{std}(\{l_{\theta^i}(\latent_{t+1})\}_{i=1}^M),
\end{equation}
where $l_{\theta}^i(\latent_{t+1})$ is the logit outputs of the discrete distribution $p_{\theta}^i(\cdot|\latent_{t+1})$. Hence, the final reward function is
\begin{equation}\label{eq:reward}
    \widehat{r}_{\theta}(\state_{t}, \action_t, \state_{t+1})=r_{\theta}(\state_{t+1})- \alpha u_{\theta}(\state_{t}, \action_t) 
\end{equation}
where $\alpha$ is a trade-off parameter between reward maximization and conservatism.

% Follow-up work \citep{yu2021combo} uses the same modeling approach but disposes of model ensembles and adapts the conservative Q-learning approach of \citep{kumar2020conservative} into an MBPO critic optimization. 
\section{\textbf{M}odel-based \textbf{O}ffline to \textbf{O}nline Fine-tuning (MOTO)}\label{method}

\begin{algorithm}[t!]\label{alg:MOTO}
\begin{small}
  \caption{MOTO: Model-based Offline to Online Fine-tuning}\label{alg:moto}
  \begin{algorithmic}[1]
    \REQUIRE Offline dataset $D$,  initialized policy $\policy_\psi$ and critics $Q_\psi$, initialized prediction and reward model $M_\theta$, policy rollout length $H$, number of offline training steps $N_{\text{offline}}$, number of online fine-tuning steps $N_{\text{online}}$, 
    number of online gradient updates per episode $G$.

    % \STATE // \textit{offline pre-training}
    \FOR{$i=1, 2, 3, \cdots, N_{\text{offline}} + N_{\text{offline}}$}
        \STATE Sample a batch of trajectories $B \sim \mathcal{D}$.
        \STATE Update $M_\theta$ on $B$ %according to Eq. \ref{eq:model_eq}.
        \STATE Generate $H$-step latent policy rollouts with penalized rewards %(Eq. \ref{eq:reward}).
        \STATE Update $\policy_\psi$ %according to Eq. \ref{eq:behaviour_regularization} 
        \STATE update $Q_\psi$ %according to Eq. \ref{final_critic} on the real and model data.

        \IF{$i > N_{\text{offline}}$ \AND $i$ $\%$ $G = 0$}
            \STATE Rollout the policy $\pi_{\theta}$ in the environment for an episode to collect a new trajectory $\tau$
            \STATE $\mathcal{D} = \mathcal{D} \cup \tau$
        \ENDIF
    \ENDFOR
  \end{algorithmic}
\end{small}
\end{algorithm}

Our model training architecture and objective follow prior works \citep{Rafailov2020LOMPO, sekar2020planning}, but we significantly change the actor-critic algorithm optimization towards the goal of efficient online fine-tuning from offline data. We build the MOTO policy optimization procedure on three main design choices: 1) model-based value expansion, 2) uncertainty-aware predictive modelling, and 3) behaviour-regularized policy optimization. The full algorithm training outline is presented in Algorithm \ref{alg:moto}. 
% The justification for these is as follows: 
    % \begin{enumerate}
    %     \item variational model-based value expansion allows us to train on-policy, while retaining the generalization capabilities of the world model, and benefiting from the sample-efficiency of model-based RL;
    %     \item conservatism biases the agent towards staying in regions of the state-action space that are in-distribution, and hence does not suffer from catastrophic failure due to out-of-distribution predictions from the world model. 
    %     \item behaviour-regularized policy optimization not only aids training during the offline phase, but is also instrumental to preventing the agent from diverging from the behavioural offline distribution i.e. it safeguards against catastrophic forgetting, which is one of the main challenges in transfer and continual learning. 

    % \end{enumerate}

\paragraph{Variational Model-Based Value Expansion}\label{section:MVE}
% We would like to train a policy using model-based training that can still utilize previously collected high-quality data without the use of latent replay buffers, as to alleviate the issues described in Section \ref{sec:related_work}. 
{We would like to train a policy via model-based training and without latent replay buffers, while making use of both the high-quality offline data and the world model as sources of supervision for the actor and critic.}
To this end, we adapt ideas from the model-based value expansion literature \citep{feinberg2018model, buckman2018sample, amos2021model, clavera2020model}. We consider sequences of data of the form $\tau = (\obs_{1:T}, \action_{1:T}, \rew_{1:T})$. At each agent training step, we infer latent states $\state_{1:T}^0\sim q_{\theta}(\state_{1:T}|\obs_{1:T}, \action_{1:T})$. We then use the true data as starting points for model-generated rollouts, as: 
\begin{equation}
\hat{\action}_j^t\sim \pi_{\psi}(\action|\hat{\state}_j^{t-1}),  \quad \hat{\state}_{j}^{t+1} \sim p_{\theta}(\state|\hat{\action}_j^t, \hat{\state}_j^t), \quad \hat{\rew}_j^t\sim p_{\theta}(\rew|\hat{\state}_j^t),    
\end{equation}
where the rewards are computed according to Eq. \ref{eq:reward}. Following standard off-policy learning algorithms, we use critics $\{Q_{\psi^1},Q_{\psi^2}\}$ and and target networks $\{\widebar{Q}_{\psi^1},\widebar{Q}_{\psi^2}\}$. We can then use our model to estimate Monte-Carlo based policy returns:
\begin{equation*} 
V_0^{\pi_{\psi}}(\hat{\state}_j^t) = \min\{Q_{\psi^1}(\hat{\state}_j^t, \hat{\action}_j^t), Q_{\psi^2}(\hat{\state}_j^t, \hat{\action}_j^t)\}, \quad
 V_K^{\pi_{\psi}}(\hat{\state}_j^t) = \sum_{k=1}^K \gamma^{k-1}\hat{\rew}_j^{k+t} + \gamma ^{K}V_0^{\pi_{\psi}}(\hat{\state}_j^{t+K}) 
\end{equation*}
And compute the $\text{GAE}(\gamma, \lambda)$ estimate:
\begin{flalign} \label{eq:MC_value}
& V^{\pi_{\psi}}(\hat{\state}_j^t) = (1-\lambda)\sum_{k = 1}^{H-t-1}\lambda^{k-1} V_k^{\pi_{\psi}}(\hat{\state}_j^t) + \lambda^{H-t-1}V_{H-t}^{\pi_{\psi}}(\hat{\state}_j^t)
\end{flalign}
We denote $\widehat{V}^{\policy_\psi}(\state) := \lambda V^{\pi_\psi} (\state) + (1 - \lambda) V_0^{\pi_\psi}$, and optimize the actor objective as:
\begin{equation}\label{eq:actor_objective}
    \mathcal{L}^{\text{model}}_{\pi_{\psi}} = -\frac{1}{H T}\E_{\substack{\tau\sim\mathcal{D}\\}}\E_{\pi_{\psi}, p_{\theta}}\Bigg[\sum_{t=0, j=1}^{H-1, T}\widehat{V}^{\policy_\psi}(\hat{\state}_j^t)\Bigg]
\end{equation}
This objective essentially estimates the actor return by mixing Monte-Carlo-based estimates at various horizons. Notice that is a fully differentiable function of the policy parameters, by back-propagating through the Q-functions and dynamics model. Also notice that for $H=0$ this is just the standard actor-critic policy update.

We can similarly use MC return estimates to train the critics. We recompute the critic target values $\widebar{V}^k(\hat{\state}_j^t)$ for all states similarly to Eq. \ref{eq:MC_value} using the target networks $\{\widebar{Q}_{\psi^1}, \widebar{Q}_{\psi^2}\}$. The critics are trained on both the model-generated and real data with the sum of two losses:
\begin{equation}\label{model_critic}
    \mathcal{L}^{\text{model}}_{Q_{\psi^i}} = \frac{1}{HT}\E_{\substack{\tau\sim\mathcal{D}\\}}\E_{\pi_{\psi}, p_{\theta}}\Bigg[\sum_{t=0, j=1}^{H-1, T}(\widebar{V}^{\pi_{\psi}}(\hat{\state}_j^t) - Q_{\psi^i}(\hat{\state}_j^t, \hat{\action}_j^t))^2 \Bigg]
\end{equation}
\begin{equation}\label{data_critic}
    \mathcal{L}^{\text{data}}_{Q_{\psi^i}} = \frac{1}{T-1}\E_{\substack{\tau\sim\mathcal{D}\\}}\E_{\pi_{\psi}}\Bigg[\sum_{j=1}^{T-1}\Big(
    \rew_{j+1}^0 + \gamma \widehat{V}^{\policy_\psi}(\state_{j+1}^0) - Q_{\psi^i}(\state_j^0, \action_j^0)\Big)^2 \Bigg]
\end{equation}

(notice that the second equation does not involves dynamics model samples) and the final critic loss is the sum of the two:
\begin{equation}\label{final_critic}
    \mathcal{L}^{\text{final}}_{Q_{\psi^i}} = \mathcal{L}^{\text{model}}_{Q_{\psi^i}} + \mathcal{L}^{\text{data}}_{Q_{\psi^i}}
\end{equation}
Training the critic networks on the available offline data serves as a strong supervision when the dataset already contains rollouts with high returns. 

\paragraph{Uncertainty-aware Predictive Modelling}
In order to prevent model exploitation in the offline training, we use model-based uncertainty estimates via ensemble statistics, similar to \citep{chua2018deep, clavera2018model, deisenroth2011pilco, kurutach2018model, nagabandi2020deep, luo2018algorithmic, strehl2008analysis, zanette2019tighter}. Note that the loss $\mathcal{L}^{\text{data}}_{Q_{\psi^i}}$ is computed on transitions sampled from the dataset trajectories through the inference model $q_{\theta}$ and have ground-truth environment rewards. In contrast, the critic loss $\mathcal{L}^{\text{model}}_{Q_{\psi^i}}$ is computed on synthetic states sampled from the model using only uncertainty-penalized rewards (Eq \ref{eq:disagreement}). This explicitly builds conservatism into the critic values by biasing them towards dataset states and actions. 
We also considered alternative conservative critic optimization \citep{kumar2020conservative, yu2021combo}. However, these approaches are incompatible with multi-step returns and require the use of a latent replay buffer, which is undesirable. Following prior works \citep{yu2020mopo, kidambi2020morel, Rafailov2020LOMPO}, we provide theoretical verification for our modelling choices. In addition, by studying the online fine-tuning regime, for the first time, we are able to provide empirical verification for prior offline MBRL performance bounds. Since the current work does not focus on theoretical contributions, we defer these results to Appendix \ref{empirical_theory}.

% Similarly to other model-based RL algorithms, without any regularization, the proposed method would also suffer from the model exploitation issue, which is especially prevalent in the offline training/pre-training case. The conservative critic optimization approach of \citep{kumar2020conservative, yu2021combo} is not applicable to the policy optimization outlined in Section \ref{section:MVE} as it is incompatible with multi-step returns and thus requires the use of replay buffers (which we aim to avoid). Instead we opt to use model-based uncertainty estimates: \citep{chua2018deep, clavera2018model, deisenroth2011pilco, kurutach2018model, nagabandi2020deep, luo2018algorithmic, strehl2008analysis, zanette2019tighter}. In particular, we follow the reward penalty formulation from Section \ref{prelim_lompo_combo}. We apply the correction as in Eq \ref{eq:reward} only to the model-generated rewards used in Eq. \ref{model_critic}, and we still use the ground-truth rewards in Eq. \ref{data_critic}. since the uncertainty penalty $u_{\theta}(\state, \action)\geq 0$, this implicitly builds conservatism into the critic itself. Note that the uncertainty penalty in Eq. \ref{eq:reward} is a fully differentiable function, hence the Eq. \ref{eq:actor_objective} is still a differentiable function of the policy parameters. 

\paragraph{Behaviour Prior Policy Regularization}\label{behaviour_prelim}
Realistic robot learning datasets often consist of narrow data like planner based rollouts or human demonstrations. As such, at the initial stages of training, the dynamics models can be quite inaccurate and the agent can benefit from stronger data regularization for the policy \citep{swazinna2020overcoming, cang2021behavioral, argenson2020model, matsushima2020deployment}. To avoid additional complexity from modeling the behaviour distribution, we follow an approach similar to \cite{siegel2020doing} which deploys a regularization term of the form
\begin{eqnarray*}
    \mathcal{L}_{\pi_{\psi}}^{\text{reg}} = -\E_{\tau\sim\mathcal{D}}\Bigg[\sum_{t=1}^T\log\pi_{\psi}(\action_t\mid\state_t)f\bigg(\underbrace{\gamma^HV^{\pi_{\psi}}(\state_{t+H}) + \sum_{j=1}^H\gamma^j\rew_{t+j}-V^{\pi_{\psi}}(\state_t)}_{\text{Advantage over trajectory snippet $\state_t:\state_{t+H}$}}\bigg)\Bigg]
\end{eqnarray*}
for some function $f$. The authors suggest that a simple threshold function works well (i.e adding a behaviour cloning term to snippets with positive advantage). \cite{cang2021behavioral} can also be viewed as an instance of this approach using exponential weighting. In this work, we focus on realistic robot manipulation tasks with sparse rewards, and simply threshold trajectories based on whether they achieve the goals in the environment. We then optimize the joint actor loss:
\begin{equation}\label{eq:behaviour_regularization}
    \mathcal{L}_{\pi_{\psi}} = \mathcal{L}_{\pi_{\psi}}^{\text{model}} + \beta\mathcal{L}_{\pi_{\psi}}^{\text{reg}}
\end{equation}
where $\beta$ is a hyper-parameter that trades-off between model-based optimization and data regularization.

\section{Experiments and Results}
%\begin{figure*}
%    \centering
%    \includegraphics[width=0.475\textwidth]{}
%    \includegraphics[width=0.475\textwidth]{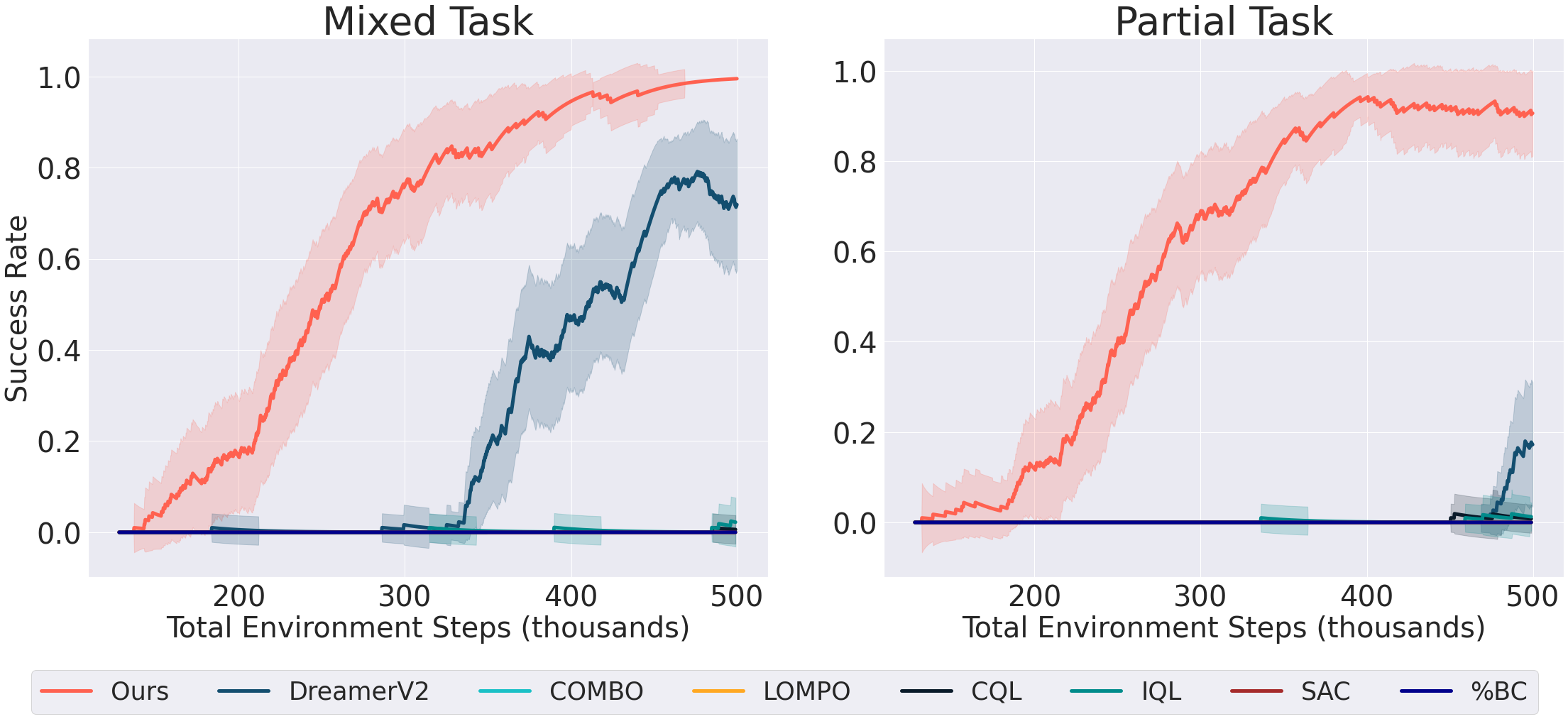}
%    \caption{We pre-train offline RL algorithms on the offline dataset and fine-tune them with additional online interactions. MOTO solves both the "mixed" and "partial" tasks. Model-free methods struggle due to partial observability of the environment and the need for combinatorial generalization on the "partial" task. Offline model-based RL algorithms have poor online performance due to distributional shift issues in the latent replay buffers. DreamerV2 is the only other algorithm to achieve non-trivial success, but learns potentially unsafe behaviours.}
%    \label{fig:main_results}
%\end{figure*}

\begin{figure*}[t]
    \centering
    \includegraphics[width=0.975\textwidth]{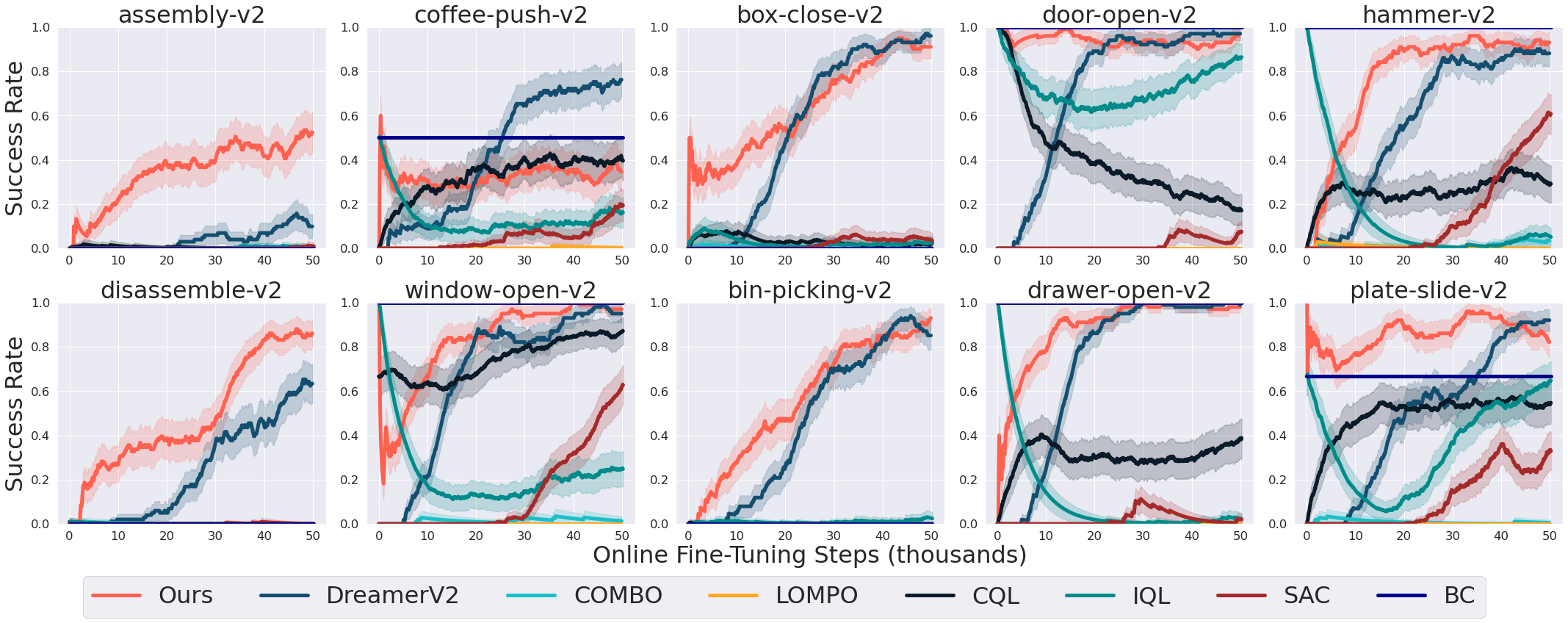}
    \caption{\footnotesize The success rates across the 10 MetaWorld tasks. MOTO matches or outperforms other methods on 9 out of the 10 tasks, demonstrating MOTO's ability to successfully pre-train offline and fine-tuning online on a variety of manipulation tasks using limited offline data. 
    DreamerV2 is the only other method to achieve competitive results on the MetaWorld tasks.
    The model free baselines achieve low to moderate performance across all tasks. 
    % COMBO and LOMPO achieve near zero performance across all tasks. 
    }
    \label{fig:metaworld}
\end{figure*}

% \noindent \subsection{Environments and Datasets}\label{environments}
% \begin{figure}
%   \begin{center}
%     \includegraphics[width=0.375\textwidth]{}
%   \end{center}
%   \caption{Observations from the Franka Kitchen and MetaWorld environment.}
% \end{figure}

% \begin{figure}
%     \centering
%     \includegraphics[width=0.25\textwidth]{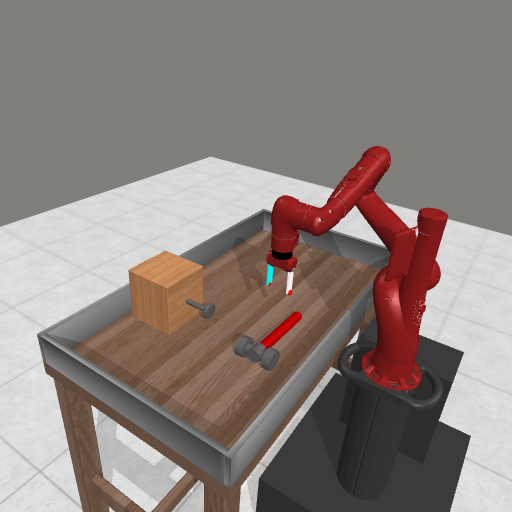}
%     \includegraphics[width=0.25\textwidth]{}
%     \caption{xxx}
%     \label{fig:environments}
%     \vspace{-6mm}
% \end{figure}

\iffalse
\begin{wrapfigure}[14]{R}{0.25\textwidth}
    \vspace{-4em}
    \centering
    \includegraphics[width=0.20\textwidth]{imgs/hammer.png}
    \includegraphics[width=0.20\textwidth]{imgs/franka.png}
    \caption{\footnotesize Visualizations of the MetaWorld \texttt{hammer-v2} and the Franka Kitchen RPL environments.}
    \label{fig:environments}
    % \vspace{-10em}
\end{wrapfigure}
\fi

% In this section, we are looking to answer the following questions:
% \begin{enumerate}
%     \item Can MOTO pre-train offline and successfully fine-tune online?
%     \item What is the impact of the uncertainty penalty and behaviour prior?
%     \item Does MOTO exhibit good generalization and sample efficiency?
% \end{enumerate}
We aim to answer the following questions: (1) Can MOTO pre-train offline and successfully fine-tune online? (2) What are the impacts of the different model components? (3) Does MOTO exhibit good generalization and sample efficiency?

% \paragraph{Experiment Setup}
% We evaluate our method on two challenging dexterous manipulation domains, MetaWorld \citep{yu2020meta} and the Standard Franka Kitchen environment \citep{rafailov2023d5rl, fu2020d4rl, gupta2019relay}\footnote{Franka Kitchen experiments took 60 wall hours on a single Titan RTX GPU.}. 
\paragraph{Experiment Setup}
We evaluate our method on two challenging dexterous manipulation domains, MetaWorld \citep{yu2020meta} and the Standard Franka Kitchen environment \citep{fu2020d4rl, gupta2019relay} used in the D5RL benchmark \citep{rafailov2023d5rl}. 

MetaWorld contains a variety of simulated manipulation tasks in a shared, table-top environment to be solved with a Sawyer robotic arm. We use ten of these tasks for our experiments (see Figure \ref{fig:metaworld_envs_all}). We modify these environments to use 64x64 RGB image observations, without any robot proprioception and use sparse rewards based on task completion. For each environment we collected a small dataset of 9-10 demonstration episodes using a scripted policy. 

We also evaluate MOTO on the Standard Franka Kitchen environment from the D5RL benchmark \citep{rafailov2023d5rl}, which is a challenging long-range control problem that requires using a simulated 9-DOF Franka Emika Robot to manipulate multiple different objects in a simulated kitchen area.
For our experiments, we only use the central camera image without the wrist camera view or robot propriocpetion. Since the "partial" task does not contain successful trajectories for all four target objects, we only regularize policy training with respect to the first three objects. 

More details about the environments and datasets can be found in Appendix \ref{appendix:experiments}.

\iffalse
The Franka Kitchen RPL environment is a challenging long-range control problem that requires using a simulated 9-DOF Franka Emika Robot to manipulate multiple different objects in a simulated kitchen area. The goal of the agent is to manipulate a set of 4 pre-defined objects into their desired configurations, receiving a reward of 1 per timestep for each object in the correct configuration. Observations are 64x64 RGB images; we do not assume access to object states or robot proprioception. For our experiments, we render the original RPL datasets (563 episodes total) and consider two tasks from the D4RL benchmark \citep{fu2020d4rl}. The ``mixed" task  requires operating the microwave, kettle, light switch and slide cabinet. It contains a small set of successful demos in the offline dataset. The ``partial" task, which requires manipulating the microwave, kettle, bottom burner and light switch does not have any trajectories that successfully complete all four objects, but has demonstrations for several configurations which complete up to three objects. Since the "partial" task does not contain successful trajectories for all four objects, we only regularize policy training with respect to the first three objects. We will release this dataset with our project to facilitate the development and testing of vision-based offline RL algorithms. 
Details of the Franka Kitchen environments and datasets can be found in Appendix \ref{appendix:experiments}.
% combinatorial generalization 
\fi

\paragraph{Baselines}\label{experiments}
We compare our method to prior vision-based offline model-based RL algorithms LOMPO \citep{Rafailov2020LOMPO} and COMBO \citep{yu2021combo} as well as DreamerV2 \citep{hafner2020mastering}, a state-of-the art online model-based learning algorithm. We also compare our approach against CQL, \citep{kumar2020conservative} a successful model-free offline RL algorithm, IQL, \citep{kostrikov2021offline} a state-of-the art model-free regression-based fine-tuning algorithm, SAC \citep{haarnoja2018soft}, and behaviour cloning. All methods are pre-trained offline for 10 thousand gradient steps and fine-tuned with online interactions for a total of 500 thousand environment steps.

\begin{figure}
    \centering
    \includegraphics[width=0.485\textwidth]{imgs/}
    \includegraphics[width = 0.485\textwidth]{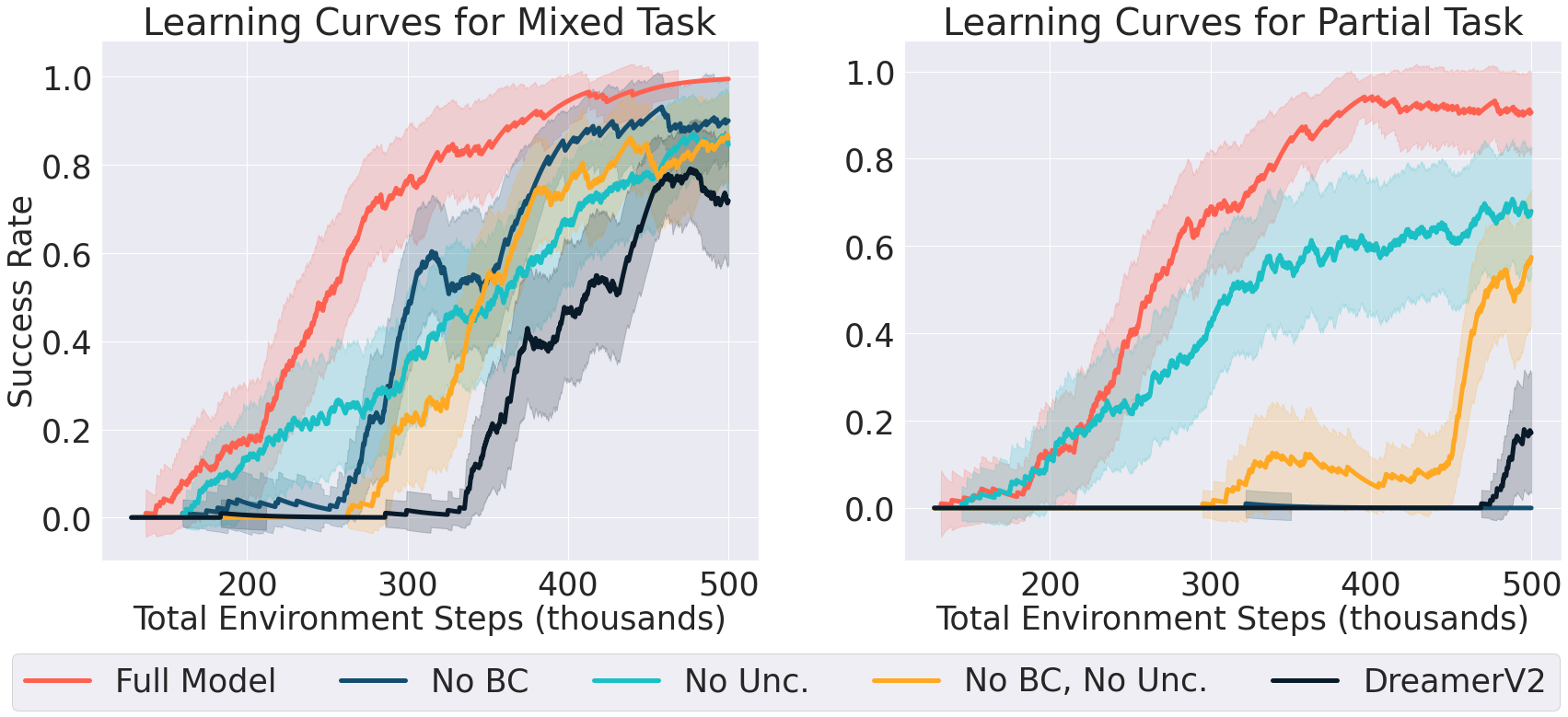}
    \caption{\footnotesize (\textbf{Left}) Success rate of completing the ``mixed'' and ``partial'' tasks in Franka Kitchen. MOTO outperforms all methods on both tasks, and is the only method to achieve meaningful progress on the ``partial'' task, indicating MOTO's capacity for combinatorial generalization.    (\textbf{Right}) {We carry out ablations on the MOTO design: no uncertainty penalties "No Unc.", no behavioral cloning regularization "No. BC", and removing both "No BC., No Unc."; removing model-based value expansion as well gives us DreamerV2. We observe that the gains from each component are additive, and only the full model achieves the best performance. Lastly, since all ablations share the same architecture, this shows that the performance improvement is not due to a stronger architecture, but rather the actor critic training. } }
    \label{fig:ablations}
    \vspace{-6mm}
\end{figure}

\paragraph{MetaWorld Results}  Results for the MetaWorld tasks are in Fig. \ref{fig:metaworld}. MOTO outperforms other methods on 9 out of 10 tasks. 
This demonstrates MOTO's
ability to successfully pre-train offline and fine-tune online on a variety of manipulation tasks using limited offline data. 
DreamerV2 is the only other method to achieve competitive results on the MetaWorld tasks.
The model-free baselines achieve low to moderate performance across all tasks.
Perhaps somewhat surprisingly, COMBO and LOMPO achieve very low success rates on most of the tasks. One possible explanation for this is that the 
MetaWorld environments have a significant degree of randomization between each new episode, causing the learned image representations to change frequently. Since COMBO and LOMPO are off-policy methods that maintain replay buffers of the latent state representations, if these representations change frequently this would lead to poor performance. 

\paragraph{Franka Kitchen Results} As seen in Fig. \ref{fig:ablations}, our method successfully solves both the ``mixed" and ``partial" tasks of Franka Kitchen with 100\% and 90.5\% final success rates respectively. 
% and it's the only method besides DreamerV2 to get non-trivial success rates.
DreamerV2 is the only other method to obtain non-trivial success rates. Noteworthy is MOTO's success on the ``partial'' task, which demonstrates that the world model is capable of combinatoric generalization. 

While model-free methods make some progress, ultimately they stagnate and cannot successfully complete all four objects on either task. This is likely due to the partial observability of the environment, since the robot can occlude manipulated objects and also requires joint state estimation directly from images. In contrast, variational models serve as Bayesian filters and naturally build state estimations of the environment in the latent space. Model-based methods LOMPO and COMBO achieve very limited progress, due to the non-stationarity issues described in the beginning of Section \ref{method}. The DreamerV2 algorithm learns more slowly and only reaches final success rates of 77.5\% and 13.5\% versus 100\% and 90.5\% for our method on the "mixed" and "partial" task. To the best of our knowledge, MOTO is the first method to solve the Franka Kitchen environment from images.

\paragraph{Ablation Studies}\label{ablations}
% \begin{figure}[H]
%     \centering
%     \includegraphics[width = 0.485\textwidth]{imgs/}
%     \includegraphics[width = 0.485\textwidth]{}
%     \caption{Model Ablations on IOD and OOD task}
%     \label{fig:ablations}
% \end{figure}

In this section, we evaluate the contribution of each model component to final performance. Results are presented in Fig. \ref{ablations} (right). We also include the standard DreamerV2 algorithm for direct model-based comparison. While all ablations make significant progress on the ``mixed" task, only the full model manages to solve it entirely. The full model outperforms the others on the ``partial'' task by a very significant margin. We also note that without any behavioral cloning (BC) data regularization, both the ``No BC., No Unc." and DreamerV2 methods learn unsafe behaviours, such as hitting the kettle into the goal position, or smashing the light switch with the robot head, instead of using its gripper to grasp and place the objects. These policies would be  unsafe for both the hardware and environment in a real setting. Such behaviours are not present in any of the regularized methods (videos are available on the project website).

\section{Discussion}
\paragraph{Future Work} The MOTO algorithm design does not require large replay buffers of intermediate representations, while still allowing the use of high-quality data to supervise the critic learning and bootstrap the policy optimization. We believe that these qualities make MOTO very suitable for realistic offline-to-online fine-tuning applications, which require large-scale models \cite{hu2022model, hu2021fiery, akan2022stretchbev}. We plan to evaluate MOTO on large-scale realistic domains, such as CARLA \cite{Dosovitskiy17}, in future work. 

MOTO is also well-suited to the model-based imitation learning setting \citep{baram2017end, rafailov2021visual, chang2021mitigating, zhang2022discriminator}, which has recently been successfully applied to real world scenarios as well \citep{lu2022imitation, bronstein2022hierarchical}. By using on-policy roll-outs, MOTO can maintain the stability and theoretical guarantees of adversarial imitation learning \citep{ho2016generative, finn2016deep, Rafailov2020LOMPO}, while still using the high-quality expert data to both provide supervision to the critic, as well as to regularize the policy. 

\paragraph{Limitations} MOTO builds a level of pessimism through penalizing state-action epistemic model uncertainty. Excessive pessimism can prevent the model from exploring or generalizing outside of the available data distribution. This can hurt performance if the offline dataset consists of lower-quality or incomplete data. MOTO also uses policy regularization, which is based on task success. It may be difficult to adapt this approach to tasks that involve more complex or non-sparse rewards.
Finally, a key component of MOTO is controlling model-based epistemic uncertainty. We train an ensemble of latent transition models and use their disagreement as a reward penalty. The models we consider use MLPs for the latent dynamics, however it is not clear if this scheme can transfer to more complex architectures, such as Transformers, which are now more widely used within the predictive modeling context.

\section{Conclusion}
We present MOTO, a model-based reinforcement learning algorithm specifically designed for the offline pre-training and down-stream fine-tuning regime. MOTO learns a variational model directly from pixels, and trains an actor-critic agent within the learned latent dynamics model using model-based value expansion, epistemic uncertainty corrections, and policy regularization. Our experiments demonstrate that all of these components have a major impact in improving performance and deriving safe and robust policies. MOTO outperforms baselines in terms of sample efficiency and final performance on 9/10 MetaWorld tasks. As far as we are aware, MOTO is the first method to solve the Franka Kitchen benchmark from images. Furthermore, by studying the offline pre-training and fine-tuning regime, we empirically verify long-standing theoretical results on the offline model-based RL problem. Finally, the structure of the algorithm makes it suitable for use with very large-scale dynamics models (such as those used in autonomous driving),
% , which prior work was not able to utilize, 
as well as for use as a backbone for model-based imitation, multi-task and transfer learning. We plan to explore these directions in follow-up works.

% Acknowledgments---Will not appear in anonymized version

% \bibliography{example}
% \bibliographystyle{icml2023}

\clearpage
% The acknowledgments are automatically included only in the final and preprint versions of the paper.
\acknowledgments{We would like to express gratitude to the reviewers, whose feedback helped improve this paper. Chelsea Finn is a CIFAR Fellow in the Learning in Machines and Brains program. This work was supported by Intel AI Labs and ONR grant N00014-21-1-2685.}

%===============================================================================

% no \bibliographystyle is required, since the corl style is automatically used.
% \bibliography{example}  % .bib

\bibliography{example}

\newpage
\appendix
% \section{Proofs}
% \label{appendix:proofs}

\section{Additional Experiments}

\subsection{Model-Based Generalization}\label{generalization}
% \begin{figure*}
%     \centering
%     \includegraphics{}
%     \caption{Caption}
%     \label{fig:ood_model_rollout}
% \end{figure*}

\begin{wrapfigure}[20]{R}{0.45\textwidth}
    \centering
    \includegraphics[width =\linewidth]{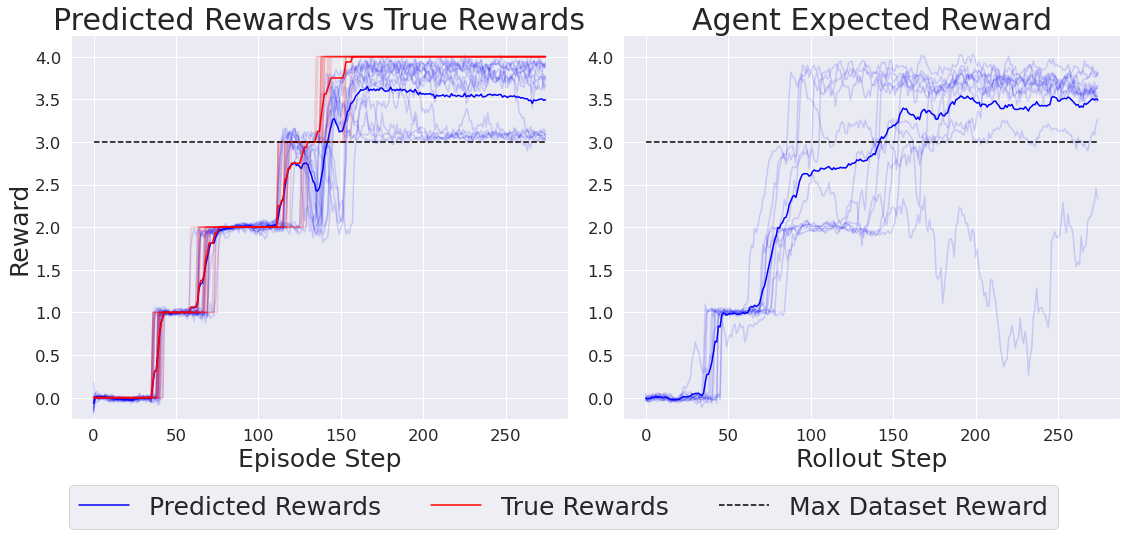}
    \caption{We evaluate the model's generalization capabilities at the end of the offline pre-training phase. The model correctly predicts rewards of up to 4 on successful episodes in the ``partial'' task, even though the maximum dataset reward is 3. (left). When doing rollouts in the learned model, the policy solves all four objects in the ``partial'' task and reaches rewards of up to 4 (right).}
    \label{fig:reward_generalization}
\end{wrapfigure}

The "partial" task also provides a good test bed for an algorithm's generalization capabilities, since the offline dataset does not contain full solutions for it. This is a different problem than the standard dynamic programming ("stitching") issue of data-centric reinforcement learning since the dataset does not contain a sequence of state-action pairs that lead from the initial state to the goal state. Instead, to solve this task, a learning agent must understand the compositional nature of the scene and do combinatorial generalization over the objects. In this section we seek to answer whether 1) the learned model can do combinatorial generalization of within distribution tasks and 2) whether policy optimization can take advantage of the model's capabilities. 
We evaluate the agent at the end of the offline pre-training phase. To answer the first question, we consider episodes that successfully complete the "partial" task from the trained agent. We condition our model on the frames that solves the first three tasks (which are covered in the offline dataset) and rollout the expert actions to predict the following frames. Results are shown in Fig. \ref{fig:ood_model_rollout}. The model successfully predicts a combination of the microwave, kettle, bottom burner and light switch in the correct configuration, despite never encountering these four objects together in the offline dataset. Moreover, we evaluate the model-predicted rewards on these expert trajectories, plotted in Fig. \ref{fig:reward_generalization} (left). We see that the model predicts rewards of up to 4 with an average reward of 3.63, despite only being trained on trajectories with maximum reward of 3. This results show that the learned model is capable of compositional generalization. To evaluate whether the learned policy can take advantage of the model generalization capabilities, we rollout the trained agent under the model and evaluate the predicted rewards, results are shown in Fig. \ref{fig:reward_generalization} (right). The agent achieves an average final reward of 3.52 under the learned model and solves all four tasks. This suggest that the model-based RL agent is able to do combinatorial generalization, but the offline dataset is not enough to adequately learn the environment dynamics.

\begin{wrapfigure}[16]{R}{0.4\textwidth}
    \centering
    \includegraphics[width =\linewidth]{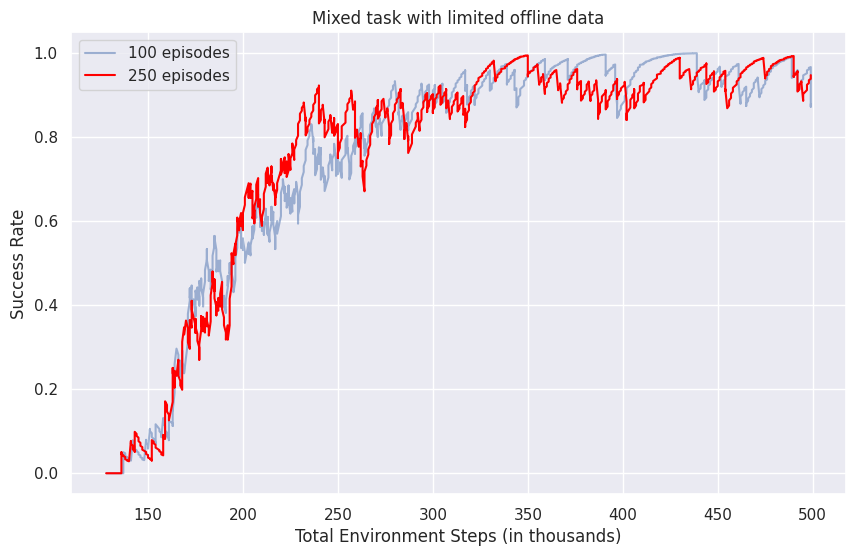}
    \caption{Training curves for data ablation experiments. We see no degradation in performance when using only 100 and 250 pre-training episodes.}
    \label{fig:data_ablation}
    \vspace{-50cm}
\end{wrapfigure}

\subsection{Constrained Offline Data Ablation}
We aim to test the performance of MOTO under a data-constrained regime, evaluating whether learning slows down with less offline data. To do so, we randomply sampled 100 and 250 episodes from the Franka Kitchen dataset and used them for offline training, evaluating on the ``Mixed'' task. The results are presented in fig. \ref{fig:data_ablation}. We observe that learning is not slowed down by reduction in offline data, at least to the extent that we tested. This shows that MOTO is robust to a constrained set of offline data, and can operate at the same performance level even with 5 times less offline episodes. Notably, we hypothesise there is a minimum threshold for diversity of offline learning, yet we see no performance degredation even at 100 episodes. It is important to point out that episodes were randomly sampled without regards to the reward attained in each episode, i.e. the 100 episodes are not of proportionally higher quality.

\begin{wrapfigure}[18]{R}{0.4\textwidth}
    \centering
    \includegraphics[width=\linewidth]{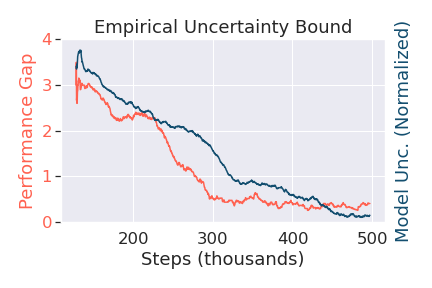}
    \caption{Empirical evaluation of Theorem \ref{theorem}. We plot the performance gap versus the the empirical estimates of (normalized) expected model uncertainty using Eq. \ref{eq:empirical_bound}.}
    \label{fig:theory}
\end{wrapfigure}

\section{Theoretical Results and Empirical Validation}\label{empirical_theory}

\paragraph{Theoretical Results for Uncertainty-Aware Model-based Training}\label{theory} Given our choice of variational parametrization and model uncertainty estimation we can directly adapt certain theoretical guarantees from prior model-based RL literature \citep{yu2020mopo, kidambi2020morel,Rafailov2020LOMPO}. We consider the following result in particular: let $T_{\theta}(\state'|\state, \action)$ and $T(\state'|\state, \action)$ be the learned and true latent dynamics models respectively. We define the discounted state-action distribution $$\rho^{\pi}_{\mathcal{T}, \mu_0}(\state, \action) \propto \sum_{t=0}^{\infty}\gamma^t\mathbb{P}_{\mathcal{T}, \mu_0}^{\pi}(\state_t=\state)\pi(\action|\state)$$ in the standard way. The function $u(\state, \action)$ is an admissible error estimator if $$d_{\mathcal{F}}[T(\state'|\state, \action)||T_{\theta}(\state'|\state, \action)]\leq u(\state, \action).$$
For any policy $\pi$ we can then define $$\epsilon_u(\pi) = \mathbb{E}_{(\state, \action) \sim \rho^{\pi}_{T_{\theta}, \mu_0}}[u(\state, \action)].$$ The following Theorem then holds:

\begin{theorem}\label{theorem}(Informal) Let $\widehat{\pi}^*(\state)$ be the optimal policy under the learned model $T_{\theta}(\state'|\state, \action)$ with an uncertainty-penalized reward and $\pi^*$ the optimal policy in the ground-truth MDP. Under certain mild assumptions, then the following inequality holds:
\begin{equation}\label{eq:performance_bound}
    2\alpha \epsilon_u(\pi^*) \geq \mathbb{E}_{\pi^*, T}\Big[\sum_{t=0}^{\infty}r_t\Big] - \mathbb{E}_{\widehat{\pi}^*, T}\Big[\sum_{t=0}^{\infty}r_t\Big]
\end{equation}
\end{theorem}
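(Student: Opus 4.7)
My plan is to adapt the pessimistic-MDP sandwich argument used in MOPO, MOReL, and LOMPO to the latent-state setting here. For any policy $\pi$ I would work with three value functions: the true value $V^{\pi}_T$ under ground-truth latent dynamics $T$ with reward $r$, the unpenalized model value $V^{\pi}_{T_{\theta}}$ under $T_{\theta}$ with reward $r$, and the penalized model value $\tilde V^{\pi}_{T_{\theta}}$ under $T_{\theta}$ with reward $\tilde r = r - \alpha u$. By construction $\hat\pi^{*} \in \arg\max_{\pi} \tilde V^{\pi}_{T_{\theta}}$ and $\pi^{*} \in \arg\max_{\pi} V^{\pi}_{T}$. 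The strategy is to show that $\tilde V^{\pi}_{T_{\theta}}$ is a pointwise lower bound on $V^{\pi}_T$ (pessimism), while the slack in this lower bound is at most $2\alpha\epsilon_u(\pi)$, and then to sandwich $V^{\hat\pi^{*}}_T$ against $V^{\pi^{*}}_T$ via the optimality of $\hat\pi^{*}$ in the penalized model.

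For the pessimism step I would invoke the standard telescoping / simulation lemma for two MDPs sharing the same reward $r$, written against the occupancy of the learned model:
$$V^{\pi}_{T_{\theta}} - V^{\pi}_{T} \;=\; \tfrac{\gamma}{1-\gamma}\,\mathbb{E}_{(s,a)\sim \rho^{\pi}_{T_{\theta},\mu_0}}\!\bigl[\mathbb{E}_{s'\sim T_{\theta}}[V^{\pi}_{T}(s')] - \mathbb{E}_{s'\sim T}[V^{\pi}_{T}(s')]\bigr].$$
The inner difference is an $\mathcal{F}$-IPM of $T_{\theta}(\cdot\mid s,a)$ against $T(\cdot\mid s,a)$ tested on $V^{\pi}_{T}$; the admissibility hypothesis $d_{\mathcal{F}}[T\,\|\,T_{\theta}] \le u(s,a)$ then bounds this by a constant multiple of $u(s,a)$. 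Absorbing the constant, together with the $\tfrac{\gamma}{1-\gamma}$ factor and a bound on $\|V^{\pi}_T\|_{\mathcal{F}}$, into the penalty coefficient $\alpha$, I obtain $|V^{\pi}_{T_{\theta}} - V^{\pi}_{T}| \le \alpha\,\epsilon_u(\pi)$. Under the same normalization, $\tilde V^{\pi}_{T_{\theta}} = V^{\pi}_{T_{\theta}} - \alpha\,\epsilon_u(\pi)$, so combining gives the two-sided estimate $\tilde V^{\pi}_{T_{\theta}} \le V^{\pi}_{T} \le \tilde V^{\pi}_{T_{\theta}} + 2\alpha\,\epsilon_u(\pi)$ valid for every $\pi$.

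With these bounds in hand, the final chain is immediate. Apply pessimism at $\pi = \hat\pi^{*}$, then the optimality of $\hat\pi^{*}$ in the penalized model, then the upper bound at $\pi = \pi^{*}$:
$$V^{\hat\pi^{*}}_{T} \;\ge\; \tilde V^{\hat\pi^{*}}_{T_{\theta}} \;\ge\; \tilde V^{\pi^{*}}_{T_{\theta}} \;\ge\; V^{\pi^{*}}_{T} - 2\alpha\,\epsilon_u(\pi^{*}),$$
which rearranges to exactly the inequality in (\ref{eq:performance_bound}).

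The main obstacle I anticipate is making the simulation-lemma step rigorous for the specific uncertainty estimator used here, namely the ensemble standard deviation of the categorical logits in Eq.~\ref{eq:disagreement}. Pushing this through requires choosing an IPM class $\mathcal{F}$ large enough to contain the value functions $V^{\pi}_T$ that arise (which in turn needs uniformly bounded rewards and some regularity of $T$ on the latent space), yet small enough that the ensemble-disagreement estimator is provably an admissible surrogate for $d_{\mathcal{F}}[T\,\|\,T_{\theta}]$ with high probability. This is exactly the "mild assumption" clause in the informal statement and is inherited essentially verbatim from the LOMPO analysis; once those assumptions are granted, the remainder of the argument above is mechanical.
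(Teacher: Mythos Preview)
Your proposal is correct and is essentially the same approach as the paper's: the paper does not give its own argument but simply defers to the MOPO analysis, and your pessimistic-MDP sandwich (simulation lemma to get $|V^{\pi}_{T_\theta}-V^{\pi}_{T}|\le \alpha\,\epsilon_u(\pi)$, subtract the penalty to obtain the two-sided bound on $\tilde V^{\pi}_{T_\theta}$, then chain through the optimality of $\hat\pi^{*}$) is exactly that argument. Your closing caveat about the admissibility of the ensemble-disagreement estimator is also well placed, as this is precisely what the paper leaves under the ``mild assumptions'' clause.
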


\begin{proof}
Consult \citep{yu2020mopo}. 
\end{proof}

\paragraph{Empirical verification} From the Theorem, we can deduce that the policy under-performance is upper bounded by the discounted model-based uncertainty over the state-action distribution induced by the expert policy under the learned model. In practice we do not have access to an oracle estimator $u(\state,\action)$ and we use the ensemble disagreement from Eq. \ref{eq:disagreement}. While these results are not new, empirical verification is difficult in the fully offline case, since we have a static dataset, and all values are point estimates. However, in the online fine-tuning case, we have a continuum of datasets and we can empirically verify the claims of Theorem \ref{theorem}.  

 We periodically evaluate $\epsilon_u(\pi^*)$ and the expected model uncertainty induced under the expert state-action distribution in the learned model. At each epoch $E$, we cannot generate model rollouts from the expert, since that would require training an expert policy under the current inference model $q_{\theta_E}$. However, we can sample expert episodes from the trained expert and the environment. Given an expert trajectory $\tau^{\text{exp}} = \obs_{1:T}, \action_{1:T}$ we sample latent belief states from the first $T-H$ steps to obtain $\state_{1:(T-h)}\sim q_{\theta_E}(\cdot|\obs_{1:T-H}, \action_{1:T-H})$. From each state $\state_j$ we then rollout the expert actions $\action_{j:j+H}$ using the current iteration of the dynamics model $T_{\theta_E}$ and obtain states $\{(\hat{\state}_{j}^t, \action_j^t\}_{j=1, t=0}^{T-H, H}$ as in Section \ref{section:MVE} (here $\action_j^t = \action_{j+t}$ from the expert dataset. We can then obtain the empirical estimate of 
\begin{equation}\label{eq:empirical_bound}
\epsilon_u(\pi^*) \approx \mathbb{E}_{q_{\theta_E}(\state_j^0|\tau^{\text{exp}} ),T_{\theta_E}}\Big[\frac{1}{H(T-H)}\sum u_{\theta}(\hat{\state}_j^t, \action_{j}^t)\Big]
\end{equation}
Empirical results evaluated on the "partial" task are shown in Fig. \ref{fig:theory}. We see that the performance gap is strongly bounded (up to a choice of the penalty scale) by the estimate from Eq. \ref{eq:empirical_bound}, which verifies the claim of Theorem \ref{theorem}.

\section{Experimental Details}
\label{appendix:experiments}

\subsection{Environments}

\begin{figure*}[t]
    \centering
    \includegraphics[width=0.175\textwidth]{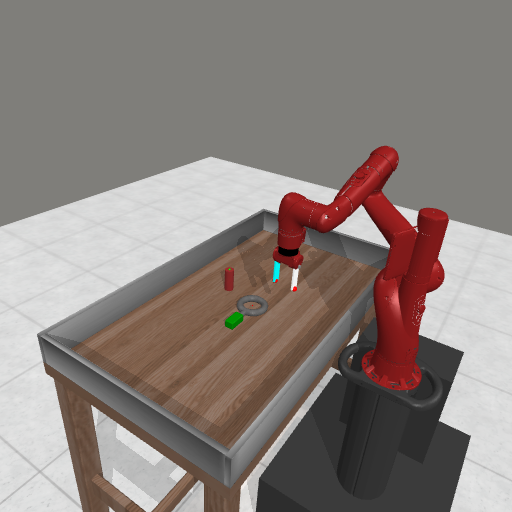}
    \includegraphics[width=0.175\textwidth]{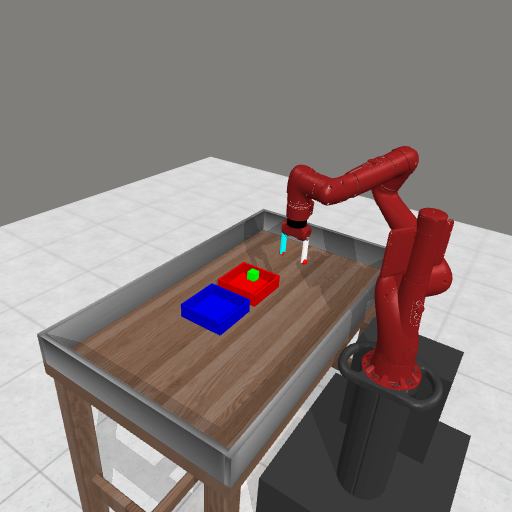}
    \includegraphics[width=0.175\textwidth]{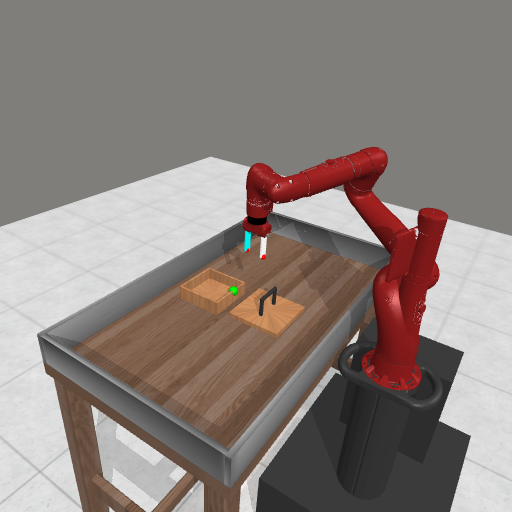}
    \includegraphics[width=0.175\textwidth]{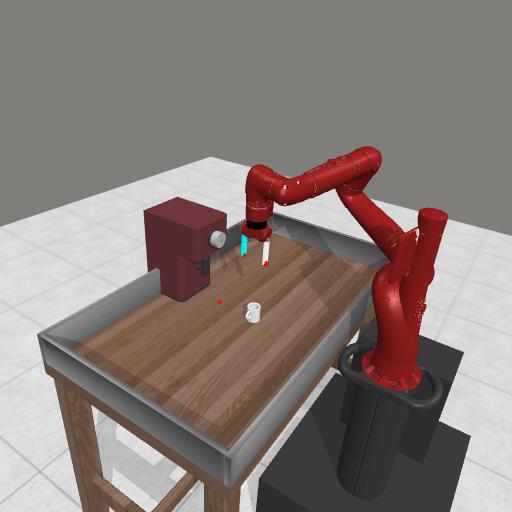}
    \includegraphics[width=0.175\textwidth]{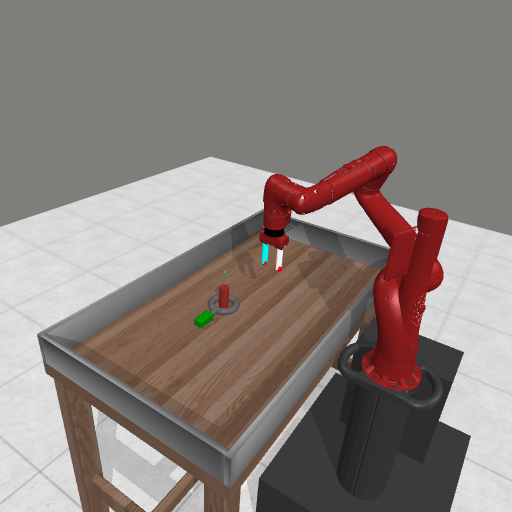}
    \includegraphics[width=0.175\textwidth]{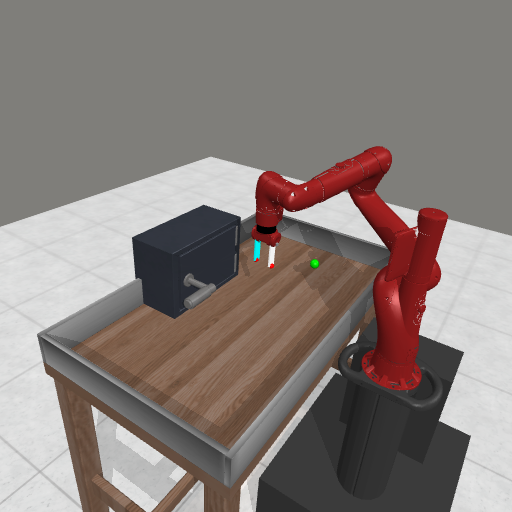}
    \includegraphics[width=0.175\textwidth]{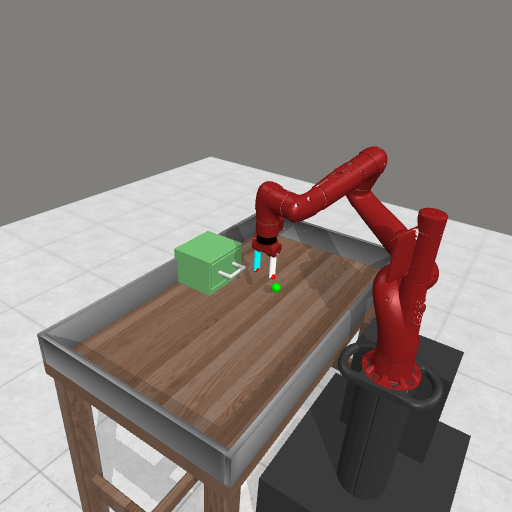}
    \includegraphics[width=0.175\textwidth]{imgs/hammer.png}
    \includegraphics[width=0.175\textwidth]{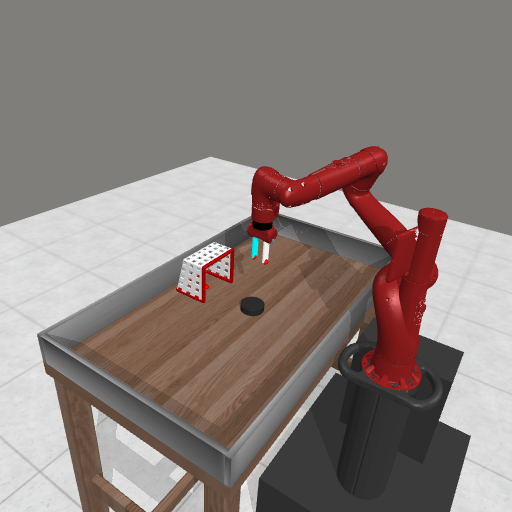}
    \includegraphics[width=0.175\textwidth]{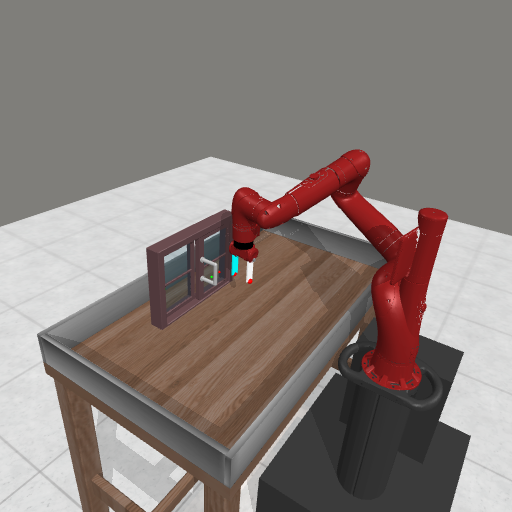}
    \caption{\footnotesize Visualization of the 10 different MetaWorld environments used in our experiments. Top row from left to right: \texttt{assembly-v2}, \texttt{bin-picking-v2}, \texttt{box-close-v2}, \texttt{coffee-push-v2}, \texttt{disassemble-v2}. Bottom row from left to right: \texttt{door-open-v2}, \texttt{drawer-open-v2}, \texttt{hammer-v2}, \texttt{plate-slide-v2}, \texttt{window-open-v2}. 
    }
    \label{fig:metaworld_envs_all}
\end{figure*}

% tasks = ["assembly", "bin-picking", "box-close", "coffee-push", "disassemble", "door-open", "drawer-open", "hammer", "plate-slide", "window-open"]

The Franka Kitchen environment from \cite{gupta2019relay} (RPL) is a challenging long-range control problem, which involves a simulated 9-DOF Franka Emika Robot in a kitchen setting. The robot uses joint-space control and the observation is a single 64x64 RGB image; we do not assume access to object states or robot proprioception. The goal of the agent is to manipulate a set of 4 pre-defined objects and receives a reward of 1.0 for each object in right configuration at each time step. This is a very challenging environment due to 1) high-dimensional observation spaces; 2) partial observability with non-trivial object and robot state estimation; 3) need for very-fine-grained control in order to operate the small elements of the environment, such as turning knobs and flipping the light-switch;  4) the long-range nature of the tasks; 5) the use of sparse rewards, which provide limited intermediate supervision to the policy, and finally 6) the use of high-dimensional control which requires learning forward kinematics from images alone. For our experiments we render the original RPL datasets and consider two environments from the D4RL benchmark \citep{fu2020d4rl}. The "mixed" task requires operating the microwave, kettle, light switch and slide cabinet and has a small set of successful demos in the offline dataset. The "partial" task, which requires manipulating the microwave, kettle, bottom burner and light switch does not have any trajectories that successfully complete all four objects, but has demonstrations for several configurations which complete up to three objects. We will release this dataset with our project to facilitate the development and testing of vision-based offline RL algorithms. 

For the model-free methods, since we use a feedforward network for encoding images, we use a framestack of 3 for all of our model-free experiments. At each timestep $t$, the agent was provided with a history of the previous $3$ images (from the offline trajectories during offline training, or from the environment during online training). 
For COMBO and LOMPO, since the latent dynamics model has a recurrent component and therefore can implicitly retain a history of observations, we did not use any framestacking with the image observations from the environments.

One the Franka Kitchen Environment, we did not use an action repeat, and on the Metaworld environments and data we used an action repeat of 2. 
For the online finetuning experiments, we used the following procedure: roll out the current policy in the environment for a single episode, add that episode to the replay buffer, and then finetuning the model, critic network, and the policy network. On the Franka Kitchen environment, after each episode we performed $50$ gradient steps on each component of each method (eg: model, critic network, and the policy network). For the Metaworld environments, we performed $20$ gradient steps after each episode. In total, on the Franka Kitchen environments, we performed $10,000$ gradient steps of offline training and $66,300$gradient steps of online finetuning. On the Metaworld environments, we performed $1,000$ gradient steps of offline training and $20,000$ gradient steps of online finetuning.

\subsection{Datasets}

\begin{table}
    \centering
\begin{tabular}{ccc}
    Environment &  Avg. Return & Success Rate  \\  \midrule \\
    assembly-v2 & $36.000$ & $1.000$ \\
    bin-picking-v2 & $20.900$ & $1.000$ \\
    box-close-v2 & $25.300$ & $1.000$ \\
    coffee-push-v2 & $36.200$ & $1.000$ \\
    disassemble-v2 & $31.556$ & $1.000$ \\
    door-open-v2 & $15.200$ & $1.000$ \\
    drawer-open-v2 & $48.000$ & $1.000$ \\
    hammer-v2 & $63.333$ & $1.000$ \\
    plate-slide-v2 & $71.100$ & $1.000$ \\
    window-open-v2 & $60.500$ & $1.000$ \\
    \bottomrule \\
\end{tabular}
\caption{Undiscounted episode returns and success rates in the MetaWorld datasets.}
\label{tab:metaworld_res}
\end{table}

\paragraph{Kitchen}
\begin{itemize}
    \item Number of trajectories: $563$
    \item Number of transitions: $128,569$
    \item Average undiscounted episode return: $261.12$
    \item Average number of objects manipulated per episode: $3.98$
\end{itemize}

\paragraph{MetaWorld} All of the MetaWorld datasets have $9-10$ trajectories and $1,010$ total transitions. The average undiscounted episode returns and success rates are shown in Table \ref{tab:metaworld_res}:

\subsection{Model Based Methods}

MOTO uses the model architecture from \citep{Dreamer2020Hafner}. For the convolutional image encoder network, we use the following hyperparameters:

\begin{itemize}
    \item channels: $(48, 96, 192, 384)$
    \item kernel sizes: $(4, 4, 4, 4)$
    \item strides: $(2, 2, 2, 2)$
    \item padding: \texttt{VALID}
    \item four final MLP layers of size: $400$
\end{itemize}

The decoder network consists of Deconvolution/Transpose convolution layers with the
following hyperparameters:

\begin{itemize}
    \item four initial MLP layers of size: $400$
    \item channels: $(128, 64, 32, 3)$
    \item kernel sizes: $(5, 5, 6, 6)$
    \item strides: $(2, 2, 2, 2)$
    \item padding: \texttt{VALID}
\end{itemize}

MOTO was trained using a model learning rate of $1 \times 10^{-4}$. The critic and policy network learning rates are $8 \times 10^{-5}$. The batch size for model training is $16$ and the batch size for agent training is $128$. We also used a filtered behavioral cloning factor of $10$ and  a disagreement penalty factor of $10$.
 % bc_scale is {0, 10} and the alpha is also {0, 10}

The latent dynamics model is represented using an RSSM \citep{rssm} with an ensemble size of $7$ models. All other hyperparameters are the default values in the DreamerV2 repository. 

The DreamerV2 baseline uses the same hyperparameters as used for MOTO (excluding the behavioral cloning factor and the disagreement penalty factor).

COMBO \citep{yu2021combo} and LOMPO \citep{Rafailov2020LOMPO} were run using the image-based implementations from the LOMPO repository. For the image encoder network of the model, we use the default convolutional encoder architecture, which has the following hyperparameters:
\begin{itemize}
    \item channels: $(32, 64, 128, 256)$
    \item kernel sizes: $(4, 4, 4, 4)$
    \item strides: $(2, 2, 2, 2)$
    \item padding: \texttt{VALID}
    \item final MLP layer size: $1024$
\end{itemize}

Similarly, the decoder network consists of Deconvolution/Transpose convolution layers with the following hyperparameters:

\begin{itemize}
    \item initial MLP layer size: $1024$
    \item channels: $(128, 64, 32, 3)$
    \item kernel sizes: $(5, 5, 6, 6)$
    \item strides: $(2, 2, 2, 2)$
    \item padding: \texttt{VALID}
\end{itemize}

The latent dynamics model is represented using an RSSM \citep{rssm} with an ensemble size of $7$ models.

Both COMBO and LOMPO were trained using a model learning rate of $6 \times 10^{-4}$, and critic network learning rate of $3 \times 10^{-4}$, and a policy network learning rate of $3 \times 10^{-4}$. The batch size for model training is $64$ and the batch size for agent training is $256$. For COMBO, we use a conservatism penalty factor of $\alpha = 2.5$, and for LOMPO we use a disagreement penalty factor of $\lambda = 5$.

\subsection{Model Free Methods}

The model free baselines (IQL \citep{kostrikov2021offline}, CQL \citep{kumar2020conservative}, SAC \citep{haarnoja2018soft}, BC) were run using the JAXRL2 framework \citep{JAXRL2}. For all policy networks, critic networks, and value networks, we used a feed-forward convolutional encoder network architecture from the D4PG method \citep{d4pg}, with the following hyperparameters:
\begin{itemize}
    \item channels: $(32, 64, 128, 256)$
    \item kernel sizes: $(3, 3, 3, 3)$
    \item strides: $(2, 2, 2, 2)$
    \item padding: \texttt{VALID}
    \item final MLP layer size: $50$
\end{itemize}

This encoder was then followed by two MLP layers of size $256$, 
followed by a final output layer of size $1$ (for critic and value networks) or of size \texttt{action-dim} for policy networks. ReLU activations were used between each layer. 

We use a discount factor $\gamma = 0.99$ and a batch size of $256$ for all of the methods, as well as a learning rate of $3\times10^{-4}$ for all policy, critic, and value networks. 
We also used a soft target update for critic and value networks with a factor of $\tau=0.005$. 
For CQL we set the conservatism penalty factor $\alpha = 5$, and for IQL we set the expectile hyperparameter 
$\tau=0.5$ and the inverse temperature hyperparameter 
$\beta = 3$, which are the default values in JAXRL2. For all other hyperparameters, we used the default values in JAXRL2.

\end{document}